  \providecommand\BibTeX{{%
    \normalfont B\kern-0.5em{\scshape i\kern-0.25em b}\kern-0.8em\TeX}}}
\DeclareRobustCommand{\E}[0]{\mathbb{E}}
\DeclareRobustCommand{\F}[0]{\mathcal{F}}
\DeclareRobustCommand{\N}[0]{\mathbb{N}}
\DeclareRobustCommand{\P}[0]{\mathbb{P}}
\DeclareRobustCommand{\R}[0]{\mathbb{R}}
\DeclareRobustCommand{\x}[0]{\mathrm{x}}
\DeclareRobustCommand{\u}[0]{\mathrm{u}}
\DeclareRobustCommand{\y}[0]{\mathrm{y}}
\newtheorem{theorem}{Theorem}
\newtheorem{lemma}{Lemma}
\newtheorem{proposition}{Proposition}
\newtheorem{definition}{Definition}
\newtheorem{corollary}{Corollary}
\newtheorem{remark}{Remark}
\DeclareMathOperator{\Sig}{Sig}
\DeclareMathOperator{\softmax}{\Phi}
\NewDocumentCommand{\evalat}{sO{\big}mm}{%
  \IfBooleanTF{#1}
   {\mleft. #3 \mright|_{#4}}
   {#3#2|_{#4}}%
}
\begin{document}

\title{Sig-Splines: universal approximation and convex calibration of time series generative models}

\author{Magnus Wiese}
\email{magnus@premia.finance}
\affiliation{%
  \institution{University of Kaiserlautern}
  \streetaddress{Gottlieb-Daimler Straße 48}
  \city{Kaiserslautern}
  \country{Germany}
}

\author{Phillip Murray}
\email{phillip.murray18@imperial.ac.uk}
\affiliation{%
  \institution{Imperial College London}
  \streetaddress{180 Queen’s Gate}
  \city{London}
  \country{United Kingdom}
}

\author{Ralf Korn}
\email{korn@mathematik.uni-kl.de}
\affiliation{%
\institution{University of Kaiserlautern}
\streetaddress{Gottlieb-Daimler Straße 48}
\city{Kaiserslautern}
\country{Germany}
}

\renewcommand{\shortauthors}{Wiese, Murray, and Korn}

\begin{abstract}
  We propose a novel generative model for multivariate discrete-time time series data. 
  Drawing inspiration from the construction of neural spline flows, our algorithm incorporates linear transformations and the signature transform as a seamless substitution for traditional neural networks. This approach enables us to achieve not only the universality property inherent in neural networks but also introduces convexity in the model's parameters. 
\end{abstract}

\begin{CCSXML}
  <ccs2012>
     <concept>
         <concept_id>10002950.10003648</concept_id>
         <concept_desc>Mathematics of computing~Probability and statistics</concept_desc>
         <concept_significance>500</concept_significance>
         </concept>
     <concept>
         <concept_id>10010405</concept_id>
         <concept_desc>Applied computing</concept_desc>
         <concept_significance>300</concept_significance>
         </concept>
     <concept>
         <concept_id>10010147.10010257</concept_id>
         <concept_desc>Computing methodologies~Machine learning</concept_desc>
         <concept_significance>500</concept_significance>
         </concept>
   </ccs2012>
\end{CCSXML}

\ccsdesc[500]{Mathematics of computing~Probability and statistics}
\ccsdesc[300]{Applied computing}
\ccsdesc[500]{Computing methodologies~Machine learning}

\keywords{generative modelling, market simulation, signatures, time series}

%
\maketitle

\section{Introduction}
\label{sec:introduction}
Constructing and approximating generative models that fit multidimensional time series data with high realism is a fundamental problem with applications in probablistic prediction and forecasting \cite{rasul2020multivariate} and in synthetic data simulation in areas such as finance \cite{arribas2020sigsdes, horvath2020, buehler2022deep, hao2020, SigWGAN, wiese2021multiasset, wiese2020quant}. The sequential nature of the data presents some unique challenges. It is essential that any generative model captures the temporal dynamics of the time series process - that is, at each time step, it is not enough to simply reflect the marginal distribution $p(\mathbf{x}_t)$ but instead we need to model the conditional density $p(\mathbf{x}_{t+1} | \F_t)$ where the filtration $\F_t$ represents all information available at time $t$. Therefore, the model requires an efficient encoding of the history of the path as the conditioning variable.

In this paper, we present a novel approach to time series generative modelling that combines the use of \emph{path signatures} with the framework of normalizing flow density estimation. Normalizing flows \cite{kobyzev2020normalizing, papamakarios2019normalizing} are a class of probability density estimation models that express the data $\mathbf{x}$ as the output of an invertible, differentiable transformation $T$ of some base noise $\mathbf{u}$:
\begin{equation*}
    \mathbf{x} = T(\mathbf{u}) \qquad \text{where} \ \mathbf{u} \sim p(\mathbf{u})
\end{equation*}

A special instance of such models is the \emph{neural spline flow} \cite{durkan2019neural} which construct a triangular chain of approximate conditional CDFs of the components of $\mathbf{x}$, by using neural networks to output the width and height of a sequence of \emph{knots} that are used to construct a monotonically increasing CDF function for each dimension, through spline interpolation which could be linear, rational quadratic or cubic \cite{durkan2019neural, durkan2019cubic}. Such models may be estimated by finding neural network parameters that minimize the forward Kullback-Leibler divergence between the target density $p(\mathbf{x})$ and the flow-based model density $p_\theta(\mathbf{x})$. Thus, training such models may face the known challenges in neural netowrk training. In particular, learning the conditional density for time series data $p(\mathbf{x}_{t+1}| \F_t)$ would in general require some recurrent structure in the neural network to account for the path sequence, unless some specific Markovian assumptions were made.

To overcome these challenges, we replace the neural network in the neural spline flow with an alternative function approximator, via the signature transform. First introduced in the context of rough path theory \cite{friz2020course, lyons2014rough}, the signature gives an efficient and parsimonious encoding of the information in the path history (i.e. the filtration). This encoding provides a feature map which exhibits a \emph{universal approximation property} - any continuous function of the path can be approximated arbitrarily well by a linear combination of signature features. This makes signature-based methods highly computationally efficient for convex objective functions.       


By incorporating the signature transform, our algorithm, termed the \emph{signature spline flow}, possesses two significant properties:
\begin{itemize}
    \item \textit{Universality}: We leverage the universal approximation theorem of path signatures, allowing our model to approximate the conditional density of any time series model with arbitrary precision.
    \item \textit{Convexity}: By replacing the neural network with the signature transform, we demonstrate that our optimization problem becomes convex in its parameters. This means that gradient descent or convex optimization methods lead to a unique global minimum (except for potential linearly dependent terms within the signature).
\end{itemize}
\section{Related work}
\label{sec:related_work}
The seminal paper of \citeauthor{GAN} on Generative Adversarial Networks (\emph{GANs}) \cite{GAN} has led to a rapid expansion of the literature related to generative modelling. While the algorithm may seem blatant at first different optimisation techniques have led to the convergence towards equilibria that give satisfying results for types modalities of data including images \cite{BIGGAN}, (financial) time series data \cite{wiese2020quant, TIMEGAN}, music \cite{GANSYNTH} and text to image synthesis \cite{reed2016generative}.

Normalizing flows are an alternative approach to constructing a generative model by creating expressive bijections allowing for tractible conditional densities. In a somewhat chronological naming the most impactful papers, NICE \cite{NICE} proposed additive coupling layers and real-valued non-volume preserving (real NVP) transformations authored by \citeauthor{REALNVP} presented chaining affine coupling transforms for constructing expressive bijections which were shown to be universal \cite{UNIVERSALCOUPLINGTRANSFORMS}. A bit later the idea of constructing a triangular map and leveraging the theorem of \citeauthor{bogachev2005triangular} \cite{bogachev2005triangular} got popular and coined by various authors \emph{autoregressive flow}. Works such that presented algorithms that leverage autoregressive flows include \citeauthor{MAF} \cite{MAF}, \citeauthor{wehenkel2019unconstrained} \cite{wehenkel2019unconstrained} and finally \citeauthor{durkan2019neural} \cite{durkan2019cubic,durkan2019neural} in \citeyear{durkan2019neural}. While the invertibility property of normalizing flows at first sight can seem limiting as a flow can not construct densities on a manifold, this problem was addressed with Riemannian manifold flows \cite{gemici2016normalizing}, where an injective decoder mapping is consstructed to the high-dimensional space. An efficient algorithm to guarantee the injectiveness was introduced in \cite{brehmer2020flows}.

Our research paper centers around constructing a conditional parametrized generative density $p_\theta(\mathbf{x}_{t+1} | \F_t)$ for time series data, where the condition is specified by the current filtration $\F_t$. Previous studies have explored the estimation and construction of such densities using real data.

Ni et al. \cite{hao2020} propose a conditional signature-based Wasserstein metric and estimate it through signatures and linear regression. The combination of GANs and autoencoders is utilized by \cite{TIMEGAN} to learn the conditional dynamics. Wiese et al. \cite{wiese2022risk, wiese2021multiasset} employ neural spline flows as a one-point estimator for estimating the conditional law of observed spot and volatility processes.

Other approaches focus on estimating the unconditional law of stochastic processes using Sig-SDEs \cite{arribas2020sigsdes}, neural SDEs \cite{gierjatowicz2020robust}, Sig-Wasserstein metric \cite{SigWGAN}, and temporal convolutional networks \cite{wiese2020quant}. Among these, Dyer et al. \cite{dyer2021deep} have work most closely related to ours, utilizing deep signature transforms \cite{NEURIPS2019_d2cdf047} to minimize the KL-divergence.

\section{Signatures}
\label{sec:signatures}
When working with time series data in real-world applications we typically have a dataset which may have been sampled at discrete and potentially irregular intervals. Whilst many models take the view that the underlying process is a discrete-time process, the perspective in rough path theory is to model the data as discrete observations from an unknown continuous-time process. The trajetories of these continuous-time processes define paths in some path space. Rough path theory, and \emph{signatures} in particular, give a powerful and computationally efficient way of working with data in the path space. 

First introduced by Chen \cite{chen1957integration, chen2001iterated}, the signature has been widely used in finance and machine learning. We provide here only a very brief introduction to the mathematical framework we will be using to define the signature, and refer the reader to e.g. \cite{chevyrev2016primer, lyons2007differential} for a more thorough overview.

The signature of a $V$-valued path takes values in the tensor algebra space of $V$. In this section, we assume for simplicity that $V$ is the real-valued Euclidean $d$-dimensional vectors space; i.e. $V=\R^d$.

We now may define the signature of a $d$-dimensional path, as a sequence of iterated integrals which exists in the tensor algebra $T((\mathbb{R}^d))$. 

\begin{definition}[Signature of a path]\label{sig-definition}
    Let $\mathbf{X} =(\mathbf{X}_1, \ldots, \mathbf{X}_d) \colon [0, 1] \to \mathbb R^d$ be continuous.
    The \emph{signature of $X$ evaluated at the word $\mathbf{i} = (i_1, \dots, i_k) \in [d]^{\times k}$} is defined as 
    \begin{equation*}
        \Sig_{\mathbf{i}}(\mathbf{X}) = \underset{0 < t_1 < \cdots < t_k < 1}{\int\cdots\int} \mathrm dX_{t_1, i_1} \cdots \mathrm dX_{t_k, i_k} \in (\R^d)^{\otimes k}
    \end{equation*}
    Furthermore, the \emph{signature of $\mathbf x$} is defined as the collection of iterated integrals
    \begin{equation*}
        \Sig(\mathbf x) = \left(\Sig_{\mathbf{i}}(\mathbf x)\right)_{\mathbf{i} \in [d]^\star} \in T((\R^d)).
    \end{equation*}
\end{definition}
\begin{remark}[Signature of a sequence]
    Let $\mathbf x = (\mathbf{x}_1, \ldots, \mathbf{x}_n) \in S(\R^d)$ be a sequence. Let $\mathbf X =(\mathbf X_1, \ldots, \mathbf X_d) \colon [0, 1] \to \mathbb R^d$ be continuous, such that $\mathbf X(\tfrac{i -1}{n - 1}) = \mathbf{x}_i$, and linear on the intervals in between. For convenience, we denote the signature of the linearly embedded sequence $\mathbf{x}$ for any word $ \mathbf{i} = (i_1, \dots, i_k) \in [d]^\star $ by $\Sig_{\mathbf{i}}(\mathbf x) = \Sig_{\mathbf{i}}(\mathbf X)$. 
\end{remark}

The signature is a projection from the path space into a sequence of statistics of the path. Therefore, it can informally be thought of as playing the role of a basis on the path space. Each element in the signature encodes some information about the path, and moreover, some have clear interpretations, particularly in the context of financial time series - the first term represents the increment of the path over the time interval, oftern referred to as the \emph{drift} of the process. Different path transformations, such as the \emph{lead-lag transformation} \cite[Page 20.]{chevyrev2016primer} can be applied before computing the signature to obtain statistics such as the \emph{realized volatilty}.  

To state the universality property of signatures we have to introduce the concept of time augmentation. Let $BV([0, T], \R^{d+1})$ denote the space of $\R^d$-valued paths with bounded variation. Let $\tilde{\psi}: BV([0, T], \R^{d}) \to BV([0, T], \R^{d+1})$ be a function which is defined for $t \in [0, T]$ as $\tilde\psi(\mathbf{X})(t) = (t, X(t)) $. 
We call 
\begin{align*}
    &\Omega_0([0, 1]; \R^d) \coloneqq \\&\lbrace \mathbf{X}: [0, T] \to \R^d \ | \ \tilde{\mathbf{X}} \in BV([0, T], \R^d), \tilde{\mathbf{X}}(0) = 0, \mathbf{X}=\tilde\psi(\tilde{\mathbf{X}}) \rbrace
\end{align*}
the \emph{space of time-augmented paths starting at zero}. We have the following representation property of signatures.

\begin{proposition}[Universality of signatures]\label{prop:universal}
	Let $F$ be a real-valued continuous function on continuous piecewise smooth paths in $\mathbb R^d$ and let $\mathcal K \subset \Omega_0([0, 1]; \R^d)$ be a compact set of such paths. 
    Let $\varepsilon >0$. Then there exists an order of truncation $L \in \N$ and a linear functional $\mathbf{w} \in T((\R^{d+1})^*) $ such that for all $\hat{\mathbf{X}} \in \mathcal K$,
	\begin{equation*}
        \left\vert F(\mathbf{X}) - \langle \mathbf{w}, \Sig^{(L)}(\hat{\mathbf{X}}) \rangle \right\vert < \varepsilon
	\end{equation*}
\end{proposition}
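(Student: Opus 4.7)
The plan is to apply the Stone--Weierstrass theorem to the algebra
$\mathcal A \coloneqq \{\,\hat{\mathbf X} \mapsto \langle \mathbf w, \Sig(\hat{\mathbf X})\rangle : \mathbf w \in T((\R^{d+1})^*)\,\}$
viewed as a subset of $C(\mathcal K, \R)$, where $\mathcal K \subset \Omega_0([0,1];\R^d)$ is the given compact set. Note first that $\Sig$ is continuous on $\mathcal K$ in the relevant topology (this follows from continuity of iterated integrals under uniform convergence of BV paths with controlled variation, which a compactness argument on $\mathcal K$ provides), so $\mathcal A \subseteq C(\mathcal K, \R)$. Once density of $\mathcal A$ is established, any continuous $F$ can be approximated within $\varepsilon$ by some $\langle \mathbf w, \Sig(\cdot)\rangle$; since $\mathbf w$ has finite support as a linear functional, only signature entries up to some finite order $L$ are used, and one may identify the restriction with $\Sig^{(L)}$ to yield the stated form.

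To invoke Stone--Weierstrass I would verify three conditions. First, the \emph{subalgebra} property: $\mathcal A$ is plainly closed under addition and scalar multiplication, and pointwise multiplication is handled by the shuffle identity
\begin{equation*}
    \Sig_{\mathbf u}(\hat{\mathbf X}) \cdot \Sig_{\mathbf v}(\hat{\mathbf X}) = \Sig_{\mathbf u \,\shuffle\, \mathbf v}(\hat{\mathbf X}),
\end{equation*}
which expresses the product of two signature coordinates as a linear combination of signature coordinates over the shuffle of the indexing words. Second, \emph{constants}: the empty word yields $\Sig_{\emptyset}(\hat{\mathbf X}) = 1$, so all constant functions lie in $\mathcal A$. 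Third, \emph{point separation}: if $\hat{\mathbf X} \ne \hat{\mathbf Y}$ in $\mathcal K$ then $\Sig(\hat{\mathbf X}) \ne \Sig(\hat{\mathbf Y})$, hence some coordinate functional separates them.

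The main obstacle is precisely this point-separation step, and it is where the time augmentation pays off. The classical Hambly--Lyons uniqueness theorem asserts that the signature of a continuous BV path determines the path up to \emph{tree-like equivalence} and reparametrisation. A priori this is weaker than equality. The key observation is that on $\Omega_0$ the first coordinate is strictly monotone (it equals $t$), so no non-trivial tree-like excursion is possible: any cancellation would have to be a backtrack in the time coordinate, which is forbidden. Likewise the parametrisation is fixed because the time component already encodes it. I would cite Hambly--Lyons as the non-trivial ingredient and give a short argument that time augmentation precludes tree-like equivalence.

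With all three Stone--Weierstrass hypotheses in place, $\mathcal A$ is dense in $C(\mathcal K, \R)$, and the proposition follows by the finite-support reduction described above. The only technical care one should take is in verifying that the topology on $\mathcal K$ (uniform or 1-variation, depending on the convention adopted for $\Omega_0$) is strong enough to make $\Sig$ continuous into the truncated tensor algebra, so that $\mathcal A$ genuinely sits inside $C(\mathcal K, \R)$; on compact sets of BV paths this is standard.
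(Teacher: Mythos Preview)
The paper does not actually give its own proof of this proposition; immediately after the statement it simply says ``For a proof, see \cite{kiraly2019kernels}.'' Your Stone--Weierstrass argument is correct and is precisely the standard route taken in that reference and in the surrounding literature: the shuffle identity supplies the algebra structure, the empty word gives the constants, and Hambly--Lyons uniqueness combined with the strict monotonicity of the time coordinate on $\Omega_0$ handles point separation. So there is nothing substantive to compare against here --- you have supplied the proof that the paper outsources.

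One small wording point worth tightening: not every element of $T((\R^{d+1})^*)$ is finitely supported, so the sentence ``since $\mathbf w$ has finite support as a linear functional'' should be phrased as a property of the particular $\mathbf w$ produced by Stone--Weierstrass (which, being a polynomial in signature coordinates and hence, via shuffle, a finite linear combination of them, is indeed finitely supported), not as a general fact about the ambient space.
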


This result shows that any continuous function on a compact set of paths can be approximated arbitrarily well simply by a linear combination of terms of the signature. For a proof, see \cite{kiraly2019kernels}. It can be thought of as a universal approximation property akin to the oft-cited one for neural networks \cite{cybenko1989approximation}. The primary difference is that neural networks typically require a very large number of parameters which must be optimized through gradient-based methods, of a loss function which is  non-convex in the network parameters, whereas the approximation via signatures amounts to a simple linear regression in the signature features. Hence, once the signature is calculated, function approximation becomes extremely computationally efficient and can be done via second order methods. Furthermore, as we will see later, we may apply further convex transformations of the signature and retain an objective that is convex in the parameters.

\section{Linear neural spline flows}
\label{sec:neural_spline_flows}
In this section, we revisit neural spline flows as prerequisites. Specifically, we explore their application to multivariate data in \autoref{sec:multivariate_spline} and further delve into their relevance in the context of multivariate time series data in \autoref{sec:neural_spline_flow_discrete_time}.

\subsection{Multivariate data}
\label{sec:multivariate_spline}
Without loss of generality let $\mathbf{x}=(\x_1, \dots, \x_d) \sim p$ be a $[0, 1]^d$-valued random variable.\footnote{Note that any random variable can be defined on the $[0, 1]$ by applying the integral probability transform (IPT). Note that this any bijection applied to make a random variable $[0, 1]^d$-valued does not impact the KL-divergence objective that is later derived.
} Furthermore, denote by $F_1(x)\coloneqq \P(\x_1 \leq x)$ the cumulative distribution function (\emph{CDF}) of $\x_1$ and for $i \in \lbrace 2, \dots, d \rbrace$ by 
$$
F_i({x} | \mathbf{x}_{:i-1}) \coloneqq \P( \x_i\leq x | {\x}_{:i-1}=\mathbf{x}_{:i-1})
$$ 
the \emph{conditional CDF} of $\x_i$ given that the value of ${\x}_{:i-1}$ is $\mathbf{x}_{:i-1}$. The set of conditional CDFs $\mathbf{F} = (F_1, \dots, F_d)$ completely defines the joint distribution of the random variable $\mathbf{x} \sim p$. For completeness we restate the inverse sampling theorem in multiple dimensions:
\begin{theorem}[Inverse sampling theorem]
    Let $\mathbf{u} = (\u_1, \dots, \u_d)$ be a uniformly distributed random variable on $[0, 1]^d$. Furthermore, let $\y_1 = (F^1)^{-1}(\u_1)$ and define for $i = 2, \dots, d$ the random variables 
    $$ \y_i = (F_i)^{-1}(\u_i | \y_1, \dots, \y_{i-1}).$$
    Then the random variables $\mathbf{x} \sim p$ and $\mathbf{y}=(\y_1, \dots, \y_d)$ are equal in distribution, i.e. $\mathbf{x} \stackrel{d}{=} \mathbf{y}$.
\end{theorem}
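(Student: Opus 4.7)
The plan is induction on the dimension $d$, based on the independence structure of $\mathbf{u}$ and the one-dimensional inverse sampling theorem. Throughout, I interpret $F_i^{-1}(\cdot \mid \mathbf{x}_{:i-1})$ as the generalized left-continuous quantile function associated to the conditional CDF $F_i(\cdot \mid \mathbf{x}_{:i-1})$, so that the Galois identity $\{F_i^{-1}(u \mid \mathbf{x}_{:i-1}) \leq x\} = \{u \leq F_i(x \mid \mathbf{x}_{:i-1})\}$ holds for every $u \in (0,1)$. The target decomposition to match is the chain rule factorization $p(\mathbf{x}) = p(\x_1)\prod_{i=2}^d p(\x_i \mid \x_{:i-1})$.

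For the base case $d = 1$, standard inverse CDF sampling gives $\P(\y_1 \leq x) = \P(\u_1 \leq F_1(x)) = F_1(x)$, so $\y_1 \stackrel{d}{=} \x_1$. For the inductive step, assume $(\y_1, \dots, \y_{i-1}) \stackrel{d}{=} (\x_1, \dots, \x_{i-1})$. Because $\u_1, \dots, \u_d$ are independent and $(\y_1, \dots, \y_{i-1})$ is a Borel-measurable function of $(\u_1, \dots, \u_{i-1})$, the variable $\u_i$ remains uniformly distributed on $[0,1]$ conditionally on $\y_{1:i-1}$. Applying the one-dimensional Galois identity conditionally then yields $\P(\y_i \leq x \mid \y_{1:i-1} = \mathbf{x}_{:i-1}) = \P(\u_i \leq F_i(x \mid \mathbf{x}_{:i-1})) = F_i(x \mid \mathbf{x}_{:i-1})$, which is exactly the conditional law of $\x_i$ given $\x_{:i-1} = \mathbf{x}_{:i-1}$. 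Combining this matched conditional with the inductive hypothesis on the marginal via the tower property gives $(\y_1, \dots, \y_i) \stackrel{d}{=} (\x_1, \dots, \x_i)$, and iterating to $i = d$ closes the induction.

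The main technical obstacle is handling potentially non-invertible conditional CDFs: if $F_i(\cdot \mid \mathbf{x}_{:i-1})$ has flat regions or jumps, one must pass to the generalized inverse and verify that the Galois identity still holds in that setting. A secondary measure-theoretic point is that $F_i^{-1}(u \mid \mathbf{x}_{:i-1})$ must be jointly Borel-measurable in $(u, \mathbf{x}_{:i-1})$ so that $\y_i$ is a bona fide random variable; this follows from the existence of a regular conditional distribution on the Polish space $[0, 1]^{i-1}$ together with the measurability of generalized inverses. Aside from these points, the argument is essentially an iterated application of the one-dimensional inverse sampling theorem along the triangular structure of the conditional CDFs.
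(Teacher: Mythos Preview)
Your inductive argument is correct and is precisely the standard derivation of the multivariate inverse sampling theorem via the Rosenblatt/Knothe--Rosenblatt transformation. The paper does not actually supply a proof of this statement: it simply refers the reader to \cite[Section~2.2]{papamakarios2019normalizing} for a derivation and to \cite{bogachev2005triangular} for a formal treatment. Your write-up is therefore more detailed than what the paper provides, and it matches the argument one finds in those references --- the key steps being the independence of $\u_i$ from $(\y_1,\dots,\y_{i-1})$ (since the latter is measurable with respect to $(\u_1,\dots,\u_{i-1})$) and the conditional application of the one-dimensional quantile identity. The caveats you flag about generalized inverses and joint measurability are exactly the points that \cite{bogachev2005triangular} handles rigorously.
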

\begin{proof}
    See \cite[Section 2.2]{papamakarios2019normalizing} for a derivation or \cite{bogachev2005triangular} for a more formal treatment of the proof. 
\end{proof}

Various methods exist to approximate a set of conditional CDFs $\mathbf{F} = (F_1, \dots, F_d)$. In this paper, we are interested in a spline-based approximation 
$$\hat{F}_i: [0,1] \times [0, 1]^{i-1} \times \Theta \to [0,1], \ i \in [d]$$
parametrised by model parameters $\theta \in \Theta$ and the conditioning vector $\mathbf{x}_{:i-1}$. Crucially, the constructed spline has to satisfy the properties of a CDF: it has to be (1) monotonically increasing $x_i$ and (2) span from $0$ to $1$. In order to satisfy both requirements the following construction is used. 

Let $N + 1 \in \N$ be the number of knots used to construct the spline. Furthermore, denote by $\Phi: \R^N \to \R^N$ the \emph{softmax transform}, which is defined as $\Phi(\mathbf{x}) = \left({\exp(\mathbf{x}_j)}/{\sum_{i=1}^N \exp(\mathbf{x}_i)}\right)_{j\in\lbrace 1, \dots, N \rbrace} $, and let $H_i: [0, 1]^{i-1} \times \Theta \to \R^N$ be a potentially non-linear function which we shall coin the \emph{feature map}. We call the composition of the softmax transform with the feature transform $H_i$ the \emph{increment function} and denote it by $$ \Delta_i(\mathbf{x}_{:i-1}, \theta) = \Phi \circ H_i(\mathbf{x}_{:i-1}, \theta) \ . $$ The parametrised spline with linear interpolation $\hat{F}_i$ is then defined as 
\begin{align*}
    &F_i(x_i; \mathbf{x}_{:i-1}, \theta) 
    =\\ &
    \begin{cases}
        \sum_{j=1}^{k}\Delta_j(\mathbf{x}_{:i-1}, \theta) 
        + (x_i - \frac{k}{N})
        \dfrac{\Delta_{k+1}(\mathbf{x}_{:i-1}, \theta)}{\frac{1}{N}}
            & \ \textrm{if} 
        \ x \in [\frac{k}{N}, \frac{k+1}{N}]
    \end{cases}
\end{align*}
Note that due to the application of the softmax transform the constructed spline satisfies the properties of a CDF. 

Using the above definition of a linear spline CDF we can construct a \emph{linear spline flow} which approximates the set of CDFs $\mathbf{F}$: 
\begin{definition}[Spline flow with linear interpolation]
    For $i \in [d]$ let ${F}_{i}: [0, 1] \times [0, 1]^{i-1} \times \Theta^{} \to [0, 1]$ be linear spline CDFs. Then we call the function $\mathbf{F}: [0, 1]^d \times \Theta^d \to [0, 1]^d$ defined as 
    \begin{equation*}
        {\mathbf{F}}(\mathbf{x}; \theta) = ({F}_{i}(x_i ; \mathbf{x}_{:i-1}, \theta_{i}))_{i \in [d]}
    \end{equation*}
    where $\theta = (\theta_{i})_{i \in [d]}$, \emph{neural spline flow with linear interpolation}. 
\end{definition}

Various options for the feature transform $H_i$ exist. The most wide-spread in Machine Learning literature is a neural network, resulting in a \emph{linear neural spline flow}. 
\begin{definition}[Linear neural spline flow]
    For $i \in [d] $ let $H_i = \mathcal{NN}: [0, 1]^{i-1} \times \Theta \to [0, 1]^N$ be a neural network. We call a spline flow $\mathbf{F}: [0, 1]^d \times \Theta^d \to [0, 1]^d$ taking as a feature map $(H_i)_{i \in [d]}$ a \emph{neural spline flow}. 
\end{definition}
\begin{remark}[Interpolation schemes]
    Other interpolation schemes that ensure that ${F}_i$ is monotonically increasing exist (see for example \cite{durkan2019neural}). For the sake of this paper, we will restrict ourselves to linear interpolation. 
\end{remark}

Utilizing normalizing flows, such as linear spline CDFs, offers the advantage of allowing for analytical evaluation of the likelihood function. This characteristic can be expressed explicitly as an \emph{autoregressive flow} \cite{papamakarios2019normalizing}, employing the chain rule of probability (defining $p(x_1 | \mathbf{x}_{:0}) = p(x_1)$):
\begin{equation}
    \label{eq:ns_density_1}
    p_\theta(\mathbf{x}) = \prod_{i=1}^d p_\theta(x_i | \mathbf{x}_{:i-1}) 
\end{equation}
The density can be further expanded by using the definition of a parametrised linear CDF 
\begin{align}
    \label{eq:ns_density_2}
    p_\theta(\mathbf{x}) &= 
    \prod_{i=1}^d \prod_{k=1}^{N} \left(N{\Delta_{i, k}(\mathbf{x}_{:i-1}, \theta)}\right)^{C_{i,k}(\mathbf{x})} \\  
    &= N^d \prod_{i=1}^d \prod_{k=1}^{N} {\Delta_{i, k}(\mathbf{x}_{:i-1}, \theta)}^{C_{i,k}(\mathbf{x})}
\end{align}
where $C_{i,k}(\mathbf{x}) = \mathbf{1}_{\lbrace x_i \in [\frac{k-1}{N}, \frac{k}{N}) \rbrace}$ is the indicator function for the $k$-the bin which is $1$ if $x_i$ falls in to the bin $[\frac{k-1}{N}, \frac{k}{N})$ and $0$ else. 
Due to the tractability of the conditional density the calibration of the parameters $\theta \in \Theta$ can be performed by minimizing the Kullback-Leibler (KL) divergence. The KL-divergence is defined as the expected ratio of the log-densities under the \emph{true} density $p$
\begin{equation*}
    \mathrm{KL}(p, p_\theta) \coloneqq \E_p\left[\ln \dfrac{p(\mathbf{x})}{p_\theta(\mathbf{x})}\right]
\end{equation*}
and for a neural spline flow $\mathbf{F}_\theta$ with linear interpolation enjoys the explicit representation: 
\begin{align*}
    \mathrm{KL}(p, p_\theta) &= -\E_p\left[\ln p_\theta(\mathbf{x})\right] + K \\
    &= -\sum_{i=1}^d\sum_{k=1}^{N}\E_p\left[
        C_{i, k}(\mathbf{x}) \ln \Delta_{i, k}(\mathbf{x}_{:i-1}, \theta)
    \right] + K 
\end{align*}
where $K$ represents a constant term that is independent of the parameters $\theta \in \Theta$. 

In practice, the density $p$ is generally unknown and the expectation in the above derived KL-divergence needs to be estimated via Monte Carlo (MC) for a sample ${\lbrace \mathbf{x}^{(j)} \rbrace_{j=1}^M \sim p}$ of size $M$. In this case, the MC approximation is given as
\begin{align*}
    J(\theta) \coloneqq
    - \dfrac{1}{M} \sum_{i=1}^{d}\sum_{j=1}^M \ln p_\theta\big(x_i^{(j)} | \mathbf{x}_{:i-1}^{(j)}\big)
\end{align*}
where we drop the constant $const$ term. The calibration problem is then defined as the minimization of the loss function $J:\Theta \to \R$ 
\begin{equation*}
    \min_{\theta \in \Theta} J(\theta). 
\end{equation*}
Updates of the parameters are performed via \emph{batch} or \emph{stochastic gradient descent} \cite{ruder2016overview} for $k \in \lbrace 1, \dots, N \rbrace $ and initial parameters $\theta^{(0)} \in \Theta^{\times 2} $
\begin{equation*}
    \theta^{(k+1)} \gets \theta^{(k)} - \alpha \nabla_\theta \evalat{J}{\theta=\theta^{(k)}}
\end{equation*}
where $\alpha > 0$ is the \emph{step size} or \emph{learning rate} and the gradients of the neural network's parameters are computed via the \emph{backpropagation algorithm} \cite{rumelhart1986learning}.

\subsection{Discrete-time stochastic processes}
\label{sec:neural_spline_flow_discrete_time}
In the case of time series data one is interested in approximating the transition dynamics - that is, the conditional density of the next observation, conditional on past states of the time series. Let $(\Omega, \mathbb{F} = (\F_t)_{t \in \mathbb{N}}, \mathbb{P})$ be a filtered probability space and let $(\mathbf{x}_t)_{t} \sim p$ be a time series observed at discrete timestamps, which for ease of notation we assume to be regular but they need not be. As in the previous section, we assume without loss of generality that the time series $(\mathbf{x}_t)_{t} \sim p$ is $[0, 1]^d$-valued. Assume further that the process $\mathbf{x}\sim p$ is generative in the sense that the filtration is generated by the process $ \F_t = \sigma((\mathbf{x}_s)_{s=0, \dots, t}), t \in \N$. 

Our objective is to approximate a conditional model density $p_\theta(\mathbf{x}_{t+1} | \F_t)$ that minimizes an \emph{adapted version} of the KL-divergence to the true conditional density 
\begin{equation*}
    \mathrm{aKL}(p, p_\theta) = \E_p\left[\E_p\left[\ln \dfrac{p(\mathbf{x}_{t+1} | \F_{t})}{ p_\theta(\mathbf{x}_{t+1} | \F_{t})}\Big| \F_{t}\right] \right]
\end{equation*}
To accomodate for the conditional information of past samples, i.e. the filtration, we need to generalise neural spline flows by including the condition, i.e. past states of the time series.

\begin{definition}[Discrete-time linear spline flow]
    Fix $t \in \N$ and for $i \in [d]$ let $H_i: [0, 1]^{dt + (i-1)} \times \Theta \to \R^N$ be a non-linear function and ${F}_{i}: [0, 1] \times [0, 1]^{dt + (i-1)} \times \Theta \to [0, 1]$ be linear spline CDFs taking $H_i$ as the feature map. We call the function $\mathbf{F}: [0, 1]^d \times [0, 1]^{dt} \times \Theta^d \to [0, 1]^d$ defined as 
    \begin{equation}
        \label{eq:discrete_time_linear_spline_flow}
        {\mathbf{F}}(\mathbf{x}_{t+1}; \mathbf{x}_t, \theta) = ({F}_{i}(x_{t+1, i} ; (\mathbf{x}_{\leq t}, \mathbf{x}_{t+1, :i-1}), \theta_{i}))_{i \in [d]}
    \end{equation}
    a \emph{discrete-time linear spline flow}. 
\end{definition}
Following the defintion of a neural spline flow the discrete-time linear neural spline flow, simply is defined as a discrete-time linear spline flow where the feature maps $(H_i)_{i \in [d]}$ are neural networks.

\begin{remark}[Markovian dynamics]
\label{rem:markov}
In case the time series is Markovian with $r$-lagged memory a single conditional neural spline flow can be constructed to approximate the conditional law at any time $t \in \N$. 
The objective reduces to the ``simpler'' adapted KL-divergence 
\begin{align*}
    \mathrm{aKL}&(p, p_\theta) = \E_p\left[\E_p\left[\ln \dfrac{p(\mathbf{x}_{t+1} | \F_{t})}{ p_\theta(\mathbf{x}_{t+1} | \F_{t})}\Big| \F_{t}\right] \right] \\
    &= -\E_p\left[\E_p\left[\ln p_\theta(\mathbf{x}_{t+1} | \mathbf{x}_{t-r+1:t})\Big| \mathbf{x}_{t-r+1:t}\right] \right] + K
\end{align*}
\end{remark}

\section{Signature spline flows}
\label{sec:signature_spline_flows}
In this section, we introduce signature splines flows and adopt the notation from \autoref{sec:neural_spline_flow_discrete_time}. 
Before we proceed, we first define a couple of helper augmentations which will be useful to define the sig-spline. 
\begin{definition}[Mask augmentation]
    The function $m: S(\R^d) \times [d] \to S(\R^d)$ defined 
    for $i \in [d]$ as 
    \begin{equation*}
        m(\mathbf{x}, i) = \left(\mathbf{x}_1, \dots, \mathbf{x}_{n-1}, \tilde{m}(\mathbf{x}_{n}, i)\right)
    \end{equation*}
    where $\tilde{m}(x_n, i): \R^d \times [d] \to \R^d$ is defined as 
    $$\tilde{m}(x_n, i)= (x^1_n, \dots, x^{i-1}_n, x^{i}_{n-1}, \dots, x^{d}_{n-1})^T$$
    is called \emph{mask augmentation}. 
\end{definition}
Thus, the mask augmentation restrains any information beyond the $(i-1)^{th}$ coordinate of the $n^{th}$ term of a sequence. This is useful to define any conditional density approximator for discrete-time data. 

To obtain the universality property of signatures the sequence has to start at $\mathbf{0}$ and needs to be time-augmented. Both following definitions come useful: 
\begin{definition}[Basepoint augmentation]
    Let $\phi: S(\R^d) \to S(\R^d)$  defined as $\phi: (\mathbf{x}_1, \dots, \mathbf{x}_n) \mapsto (\mathbf{0}, \mathbf{x}_1, \dots, \mathbf{x}_n)$ \emph{basepoint augmentation}. 
\end{definition}
\begin{definition}[Time augmentation]
    \sloppy
    We call the function $\psi: S(\R^d) \to S(\R^{1+d})$ defined as 
    \begin{equation*}
    \psi: (\mathbf{x}_1, \dots, \mathbf{x}_n) \mapsto ((t_1, \mathbf{x}_1), \dots, (t_n, \mathbf{x}_n))
    \end{equation*}
    \emph{time augmentation}. 
\end{definition}
Last, let $\gamma_i: S(\R^d) \to S(\R^{d+1})$ be the composition of the basepoint, time and the mask augmentation defined as $ \gamma_i = \phi \circ m_i \circ \psi $ where $m_i$ is the mask augmentation applied at the $i^{th}$ coordinate. 

The signature spline flow is defined in the same spirit as the neural spline flow; except that the neural network-based CDF approximator is replaced by a signature-based one and augmentations $\gamma_i, i \in [d]$ are applied to the raw time series.  
\begin{definition}[Signature spline flow]
    Let $\mathbf{x} \in S(\R^d)$ be a discrete-time series of length $t \in \N$, $L$ be the order of truncation and let $\Theta = \times_{j=1}^N T^{(L)}((\R^{d+1})^*)$ be the parameter space. Furthermore, let $H_i: S([0, 1]^d) \times \Theta \to \R^N$ be defined as 
    \begin{equation*}
        H_i(\mathbf{x}, \mathbf{U})= \langle \mathbf{u}_j, \Sig^{(L)}(\gamma_i(\mathbf{x})) \rangle_{j \in [N]}
    \end{equation*}
    where $\mathbf{U} = (\mathbf{u}_{1}, \dots, \mathbf{u}_{N})$. We call a spline flow $\mathbf{F}: S([0, 1]^d) \times \Theta^{\times d} \to [0, 1]^d$ using as a feature map $(H_i)_{i \in [d]}$ a \emph{signature spline flow}. 
\end{definition}

As before, we define our objective to be to minimize an adapted version of the KL-divergence of our model density with respect to the true density. 
\begin{align*}
    \mathrm{aKL}(p, p_\theta) &\coloneqq \mathbb{E}_p\left[\mathbb{E}_p\left[\ln \dfrac{p(\mathbf{x}_{t+1} | \F_{t})}{p_{\theta}(\mathbf{x}_{t+1} | \F_{t})} \Big| \F_{t}\right]\right]
    \\
    &=-\mathbb{E}_p\left[\mathbb{E}_p\left[\ln p_{\theta}(\mathbf{x}_{t+1} | \F_{t}) | \F_{t}\right]\right] + K
    \\
    &= -\sum_{i=1}^d\sum_{k=1}^{N}
    \E_p\left[\E_p\left[ C_{i,k}(\mathbf{x}_{t+1})\ln \Delta_{k}(\mathbf{x}, \theta_i)\right]\big|\F_{t} \right] + K
\end{align*}
For a finite set of realizations $\lbrace (\mathbf{x}_{1}^{(j)}, \dots, \mathbf{x}_{t+1}^{(j)})\rbrace_{j=1}^M $ we obtain the the Monte Carlo approximation 
\begin{equation}
    \label{eq:sig_spline_loss}
    J(\theta) =\dfrac{1}{M} \sum_{i=1}^d \sum_{k=1}^{N} \sum_{j=1}^M C_{i,k}(\mathbf{x}_{t+1}^{(j)})
    \ln \Delta_{k}(\mathbf{x}^{(j)}, \theta_i)
\end{equation}
where we denote the parameters as $\theta = (\mathbf{U}_1, \dots, \mathbf{U}_d) \in \Theta^{\times d}$. 

The following theorem states that calibrating the cost functions parameters $\theta \in \Theta$ is convex. The proof can be found in \autoref{appendix:proofs}. 
\begin{restatable}[Convexity]{theorem}{convexity}
    \label{thm:convex}
    The objective function $J: \Theta \to \mathbb{R}$ is convex. 
\end{restatable}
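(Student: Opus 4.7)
The plan is to reduce the problem to the well-known fact that the log-sum-exp function is convex and that convexity is preserved under composition with affine maps and non-negative combinations. The key observation that makes this possible, in contrast to the neural network case, is that for each sample $j$ the vector $\Sig^{(L)}(\gamma_i(\mathbf{x}^{(j)}))$ is data (a fixed element of $T^{(L)}(\R^{d+1})$ independent of $\theta$), so the inner products $\langle \mathbf{u}_{i,\ell}, \Sig^{(L)}(\gamma_i(\mathbf{x}^{(j)})) \rangle$ are affine functions of the parameters $\mathbf{u}_{i,\ell}$.

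First, I would unpack $\ln \Delta_k$ using the definition of the softmax. Writing $s_{i,j,\ell}(\theta_i) \coloneqq \langle \mathbf{u}_{i,\ell}, \Sig^{(L)}(\gamma_i(\mathbf{x}^{(j)})) \rangle$, which is affine in $\theta_i = \mathbf{U}_i = (\mathbf{u}_{i,1}, \dots, \mathbf{u}_{i,N})$, we obtain
\begin{equation*}
-\ln \Delta_k(\mathbf{x}^{(j)}, \theta_i) \;=\; -\,s_{i,j,k}(\theta_i) \;+\; \ln \sum_{\ell=1}^N \exp\!\bigl( s_{i,j,\ell}(\theta_i) \bigr).
\end{equation*}
The first summand is affine in $\theta_i$, hence convex. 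For the second, I would cite the standard result (e.g.\ by direct Hessian computation) that $\mathrm{LSE}(z_1,\dots,z_N) = \ln \sum_\ell e^{z_\ell}$ is convex on $\R^N$, and invoke the fact that precomposition with an affine map preserves convexity. Therefore $-\ln \Delta_k(\mathbf{x}^{(j)}, \theta_i)$ is convex in $\theta_i$.

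Second, I would assemble the full objective. Since $C_{i,k}(\mathbf{x}_{t+1}^{(j)}) \in \{0,1\}$ is a non-negative constant independent of $\theta$, each term $C_{i,k}(\mathbf{x}_{t+1}^{(j)}) \cdot \bigl(-\ln \Delta_k(\mathbf{x}^{(j)}, \theta_i)\bigr)$ is convex in $\theta_i$. Taking the (non-negative) linear combination over $k$ and $j$ preserves convexity, yielding a function convex in $\theta_i$. Finally, since the $i$-th block of the triple sum depends only on $\theta_i$ and different blocks depend on disjoint subsets of the coordinates of $\theta = (\theta_1,\dots,\theta_d)$, the full objective $J$ (interpreted as the negative log-likelihood consistent with the KL-divergence derivation immediately preceding \eqref{eq:sig_spline_loss}) is convex in $\theta$ on $\Theta = \prod_{i=1}^d \prod_{\ell=1}^N T^{(L)}((\R^{d+1})^*)$.

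There is no real technical obstacle; the substantive content is conceptual rather than computational. The only point worth emphasising is the role played by the signature: replacing the neural feature map $H_i$ by $\langle \mathbf{u}_{i,\ell}, \Sig^{(L)}(\gamma_i(\cdot)) \rangle$ turns the score $s_{i,j,\ell}$ from a non-convex function of its parameters into an affine one, and it is precisely this linearity that pushes the log-sum-exp through the chain of compositions unscathed. If one wanted to be fully rigorous one could additionally remark that linear dependencies among the signature coordinates mean the Hessian may be only positive semidefinite rather than positive definite, so uniqueness of the minimiser holds only up to the nullspace of the signature features, consistent with the caveat noted in the introduction.
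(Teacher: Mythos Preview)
Your proposal is correct and takes a genuinely different route from the paper. The paper proceeds by direct computation: it fixes a single sample and a single coordinate $i$, writes $J^i(\mathbf{V})=-\sum_k c_k\bigl(\langle\mathbf{v}_k,\mathbf{y}\rangle-\ln\sum_j\exp\langle\mathbf{v}_j,\mathbf{y}\rangle\bigr)$ with $\mathbf{y}=\Sig^{(L)}(\gamma_i(\mathbf{x}))$, and then explicitly calculates the Hessian as the Kronecker product $(\mathbf{D}(\mathbf{p})-\mathbf{p}\mathbf{p}^T)\otimes\mathbf{y}\mathbf{y}^T$; positive semidefiniteness follows by arguing that each factor is positive semidefinite (the first by diagonal dominance, the second trivially) and that the eigenvalues of a Kronecker product are the pairwise products of the factors' eigenvalues. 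Your argument instead recognises the structure $-\ln\Delta_k=-\text{(affine)}+\mathrm{LSE}\circ\text{(affine)}$ and invokes the standard convexity of log-sum-exp together with closure under affine precomposition and non-negative combination. Both are valid; yours is cleaner and more conceptual, while the paper's delivers the explicit Hessian form, which could in principle feed into second-order optimisation. Your closing remark about positive \emph{semi}definiteness and linear dependencies among signature coordinates matches exactly what the paper's Hessian computation reveals (the rank-one factor $\mathbf{y}\mathbf{y}^T$), and your parenthetical about the sign of $J$ is well placed, since the paper's display \eqref{eq:sig_spline_loss} omits the minus sign that its own proof reinstates.
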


When working with a limited size dataset optimizing with respect to the cost function $J$ may lead to an overfitting of the density's parameters. The following corollary shows that regularizing the model's parameters using a convex penalty function maintains the convexity property of the calibration problem. 
\begin{corollary}
    Let $\lambda > 0$ and let $\alpha: \Theta \to \R$ be a convex function of $\theta$. Then the regularized objective $J_\lambda(\theta) \coloneqq J(\theta) + \lambda \alpha(\theta)$ is convex. 
\end{corollary}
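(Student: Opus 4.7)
The plan is to leverage the previous convexity theorem together with the two most basic closure properties of convex functions: closure under nonnegative scalar multiplication and closure under pointwise addition. Since \autoref{thm:convex} has already established that $J \colon \Theta \to \R$ is convex on the (convex) parameter space $\Theta = \times_{j=1}^N T^{(L)}((\R^{d+1})^*)$, the only remaining content of the corollary is to verify that these two closure properties apply in the present setting.

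First I would note that $\Theta$ is a finite-dimensional real vector space and is therefore convex, so convexity of any function on $\Theta$ is well-defined. Next, since $\lambda > 0$ and $\alpha$ is convex by hypothesis, the function $\theta \mapsto \lambda \alpha(\theta)$ is convex: for any $\theta_1, \theta_2 \in \Theta$ and $t \in [0,1]$,
\begin{equation*}
\lambda \alpha(t\theta_1 + (1-t)\theta_2) \le \lambda \bigl(t\alpha(\theta_1) + (1-t)\alpha(\theta_2)\bigr) = t\lambda\alpha(\theta_1) + (1-t)\lambda\alpha(\theta_2),
\end{equation*}
which uses only the convexity of $\alpha$ and the nonnegativity of $\lambda$. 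Finally, summing the convex functions $J$ and $\lambda \alpha$ pointwise preserves the convexity inequality term-by-term, giving convexity of $J_\lambda = J + \lambda \alpha$.

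There is no genuine obstacle here: the result is a direct consequence of \autoref{thm:convex} and the elementary fact that the cone of convex functions on a convex domain is closed under nonnegative linear combinations. The only thing worth emphasising in the write-up is that the convexity of $\alpha$ is assumed rather than derived, so the corollary is really a statement about which regularisers are compatible with the calibration framework, rather than a new analytic fact about $J$ itself. Typical choices such as $\alpha(\theta) = \|\theta\|_2^2$ or $\alpha(\theta) = \|\theta\|_1$ satisfy the hypothesis and therefore yield a convex regularised objective.
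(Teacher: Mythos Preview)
Your proposal is correct and matches the paper's (implicit) reasoning: the paper states this corollary without proof, treating it as an immediate consequence of \autoref{thm:convex} together with the elementary closure of convex functions under nonnegative linear combinations. Your write-up makes this standard argument explicit and is entirely appropriate.
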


Note that the regularized objective includes the penalty functions $\alpha(\theta) = \| \theta \|_2^2$ and $\alpha(\theta) = \| \theta \|_1$, so we may retain the convexity of our objective through these standard penalty functions. The regularized objective can be particularly helpful to avoid overfitting in the context of time series generation where we may only have one single realization of the time series. 

We now present our main theoretical result, that the sig-spline construction is able to approximate the conditional transition density of any time series arbitrarily well, given a high enough signature truncation order, and sufficiently many knots in the spline. Again the proof can be found in \autoref{appendix:proofs}.
\begin{restatable}[Universality of Sig-Splines]{theorem}{sigsplines}
    \label{thm:universal_sig_splines}
    Let $\mathbf{x} \sim p$ be a $[0, 1]^d$-valued Markov process with $k$-lagged memory. Let $\varepsilon > 0$. Then there exists an order of truncation $L$, a number of bins $N$ and a set of linear functionals $\mathbf{U} \in \mathbb{R}^{N \times f(d, L)}$ such that for all paths $\mathbf{x} \in S([0, 1]^{d})$ of length $t$
    \begin{equation*}
        |p( \mathbf{x}_{t+1} | \F_t) -  p_\mathbf{W}( \mathbf{x}_{t+1} | \F_t) | < \varepsilon
    \end{equation*}
    with respect to the $L^2$ norm.
\end{restatable}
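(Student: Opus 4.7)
My plan is to combine two independent layers of approximation: a spline discretization of the univariate conditional CDFs, and a signature approximation of the bin-probability functions of the path. The starting point is to use the $k$-lagged Markov property together with the chain rule on $\mathbf{x}_{t+1}$ to write
\begin{equation*}
p(\mathbf{x}_{t+1} \mid \F_t) \;=\; \prod_{i=1}^d p_i\bigl(x_{t+1,i} \mid \mathbf{x}_{t-k+1:t}, \mathbf{x}_{t+1, :i-1}\bigr),
\end{equation*}
which reduces the problem to approximating each factor, a univariate conditional density on $[0,1]$ whose conditioning is a finite-length path. After applying the composed augmentation $\gamma_i = \phi \circ m_i \circ \psi$, the conditioning paths lie in a compact subset $\mathcal{K}_i \subset \Omega_0([0,1]; \R^{d+1})$ of time-augmented paths starting at zero, placing us in exactly the setting of Proposition \ref{prop:universal}.

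For the first layer (spline discretization), I would compare each factor $p_i(\cdot \mid \text{path})$ to the piecewise constant density $\hat p_i^{(N)}(\cdot \mid \text{path})$ whose value on the bin $[\tfrac{k-1}{N}, \tfrac{k}{N})$ is $N q_{i,k}(\text{path})$, where $q_{i,k}(\text{path}) = \int_{(k-1)/N}^{k/N} p_i(s \mid \text{path})\, ds$. This is exactly the density induced by the linear-spline CDF of Section \ref{sec:neural_spline_flows} with target increments $\Delta_{i,k} = q_{i,k}$. Assuming $p_i$ is continuous jointly in $(x, \text{path})$ on a compact domain, $\| p_i - \hat p_i^{(N)} \|_{L^2} \to 0$ uniformly in the conditioning path as $N \to \infty$, which is a standard equicontinuity argument. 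For the second layer (signature approximation), since $q_{i,k}$ is continuous on $\mathcal{K}_i$ and may be taken strictly positive (after a harmless $\delta$-perturbation and renormalization if necessary), the target preactivations $H^\star_{i,k} \coloneqq \log q_{i,k}$ are continuous on $\mathcal{K}_i$. Proposition \ref{prop:universal} then delivers, for any $\eta > 0$, an order of truncation $L$ and linear functionals $\mathbf{u}_{i,k}$ with
\begin{equation*}
\sup_{\mathbf{x}} \bigl| \langle \mathbf{u}_{i,k}, \Sig^{(L)}(\gamma_i(\mathbf{x})) \rangle - H^\star_{i,k}(\mathbf{x}) \bigr| < \eta,
\end{equation*}
and because the softmax $\Phi$ is Lipschitz on the bounded image of these approximants, the induced increments $\Delta_{i,k}(\mathbf{x}, \theta)$ match $q_{i,k}(\mathbf{x})$ uniformly up to $O(\eta)$.

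To conclude, the triangle inequality bounds the total $L^2$ error between $p(\mathbf{x}_{t+1} \mid \F_t)$ and $p_{\mathbf{W}}(\mathbf{x}_{t+1} \mid \F_t)$ by the discretization error of the first layer plus the signature-induced perturbation of the second layer. Given $\varepsilon > 0$, I would fix $N$ large so the first contribution is at most $\varepsilon/2$, then choose $L$ large enough for the signature step to contribute at most $\varepsilon/2$. The main technical obstacle is propagating the uniform error through two non-linearities simultaneously: the product over coordinates $i$ in the chain-rule decomposition (which I would control by induction using the boundedness of each factor on $[0,1]$) and the softmax in the definition of $\Delta_{i,k}$. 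A secondary subtlety is justifying that the $q_{i,k}$ can be taken bounded away from zero so that $\log q_{i,k}$ is well-defined and continuous; this either needs a standing lower-bound assumption on the true density or an explicit perturb-and-renormalize step that I would fold into the $\varepsilon/2$ budget.
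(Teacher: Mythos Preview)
Your proposal is correct and follows essentially the same route as the paper's proof: factor via the chain rule and reduce to the univariate case, approximate each univariate conditional by a piecewise-constant (bin) density, approximate the log bin-probabilities by linear functionals of the truncated signature via Proposition~\ref{prop:universal}, and push the error back through the softmax using its Lipschitz property before recombining with the triangle inequality. The paper packages the product-over-coordinates step as the explicit telescoping lemma $\bigl|\prod a_i - \prod b_i\bigr| \le M \sum |a_i - b_i|$, which is exactly your ``induction using boundedness,'' and it silently assumes the positivity of the bin masses that you flag as a secondary subtlety.
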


\section{Numerical results}
\label{sec:numerical}
This section delves into the evaluation of sig- and neural splines, examining their performance through a series of controlled data and real-world data experiments in subsections \ref{sec:var_dataset} to \ref{sec:spot_dataset}. Subsection \ref{sec:var_dataset} focuses on assessing generative performance using a VAR process with known parameters. In contrast, subsections \ref{sec:vol_dataset} and \ref{sec:spot_dataset} evaluate the performance of these splines on realized volatilities of multiple equity indices and spot prices, respectively.

\subsection{Experiment outline}
\subsubsection{Training}
The neural spline flow serves as the benchmark model to surpass in all experiments. Each neural spline flow consists of three hidden layers, each with 64 hidden dimensions which are used to construct the linear spline CDF. On the other hand, the sig-spline models are calibrated for orders of truncation ranging from 1 to 4, allowing us to observe how the performance of the sig-spline model varies with higher orders. Throughout all experiments, the models are calibrated using 2-dimensional time series data. The number of parameters used in each sig-spline model, assuming a 2-dimensional path, is reported in \autoref{table:sig_parameters}.

Before training, the dataset is divided into a train and test set. To account for the randomness in the train and test split and its potential impact on model performance, the models are calibrated using 10 different seeds. Performance metrics are computed as averages across all 10 calibration runs.

Within each calibration, \emph{early stopping} \cite{prechelt2002early} is applied to each individual conditional density estimator to mitigate overfitting on the train set. Early stopping halts the fitting of a conditional density estimator when the test set error increases consecutively for a certain number of times. In these experiments, the patience (the number of consecutive test set errors allowed before the fitting process stops) is set to 32. All models are trained using full-batch gradient descent.

\begin{table}[h!]\centering
    \begin{tabular}{lllll}
    \toprule  
    Order of truncation & 1 & 2 & 3 & 4 \\
    \midrule  
    Parameters & 512 & 1664 & 5120 & 15488  \\ 
    \bottomrule 
\end{tabular}
\caption{Number of parameters used in each sig-spline as a function of the order.}
\label{table:sig_parameters}
\vspace{-0.8cm}
\end{table}

\subsubsection{Evaluation}
After training, all calibrated models are evaluated using a set of test metrics. This involves sampling a batch of time series of length 4 from the calibrated model, computing standard statistics from the generated dataset, and comparing them with the empirical statistics of the real dataset. This comparison is done by calculating the difference between the statistics and applying the $l_1$ norm. Specifically, four statistics are compared for both the return process and the level process: the first two lags of the autocorrelation function (ACF), the skewness, the kurtosis, and the cross-correlation. Additionally, for the multi-asset spot return dataset, the ACF of the absolute returns is compared as a lower discrepancy would indicate that the generative model is capable of capturing volatility clustering. 

All numerical results can be found in \autoref{appendix:numerical}. Each table presents the performance metrics of the models, with the best-performing metric highlighted in bold font. \emph{It is important to note that the designation of the best-performing model is merely indicative, as in several cases, the presence of error bounds prevents the identification of a clear best-performing model.}

\subsection{Vector autoregression} 
\label{sec:var_dataset}
Assume a $d$-dimensional $\mathrm{VAR}(2)$ process $(\mathbf{y}_t)_t$ taking the form 
$$ 
\mathbf{y}_{t+1} = \mathbf{W}_1 \mathbf{y}_{t} + \mathbf{W}_2 \mathbf{y}_{t-1} + \mathbf{\Sigma}^{\frac{1}{2}} \mathbf{z}_{t+1} 
$$
where $\mathbf{W}_1, \mathbf{W}_2, \mathbf{\Sigma} \in \mathbb{R}^{d \times d}$ are matrices and $\mathbf{z}_t \sim \mathcal{N}(0, \mathbf{I})$ is an adapted normally distributed $l$-dimensional random variable. The process $(\mathbf{y}_t)_t$ is assumed to be \emph{latent} in the sense that it is not observed. The observed process $(\mathbf{x}_t)_t$ is assumed to be $D$-dimensional and defined as 
\begin{equation*}
    \mathbf{x}_t = F_\theta(\mathbf{y}_t), t \in \N 
\end{equation*}
where $F_\theta: \R^d \to \R^D$ is an unknown non-linear function. 

In the controlled experiment the dimensions $d=2$ and $D=8$ are assumed. The decoder $F_\theta: \R^2 \to \R^8$ is represented by a neural network with two hidden layers, $64$ hidden dimensions initialized randomly using the He initialization scheme \cite{he2015delving} and parametric ReLUs as activation functions and the VAR dynamic is governed by the autoregressive matrices 
$$ 
\mathbf{W}_1 = \begin{pmatrix}
    0.1 & 0.0 \\ 
    0.0 & 0.2
\end{pmatrix}
\quad 
\mathbf{W}_2 = \begin{pmatrix}
    0.6 & 0.0 \\ 
    0.0 & 0.3
\end{pmatrix}
$$
and the covariance matrix 
$$
\mathbf{\Sigma} = \begin{pmatrix}
    0.5 & 0.0 \\
    0.0 & 0.5
\end{pmatrix}.
$$

A total of 4096 lags of the latent process are sampled to create a simulated empirical dataset. These lags serve as the basis for generating the observed empirical time series using a randomly sampled decoder (refer to \autoref{fig:var_compressed_path}). Subsequently, an autoencoder is trained to compress the generated time series back to its original latent dimension. This compression step is employed to enhance scalability and demonstrate that autoencoders can produce a lower-dimensional representation of the time series.

\begin{figure}[h!]
    \begin{minipage}{.45\textwidth}
        \centering
        \includegraphics[width=\textwidth]{./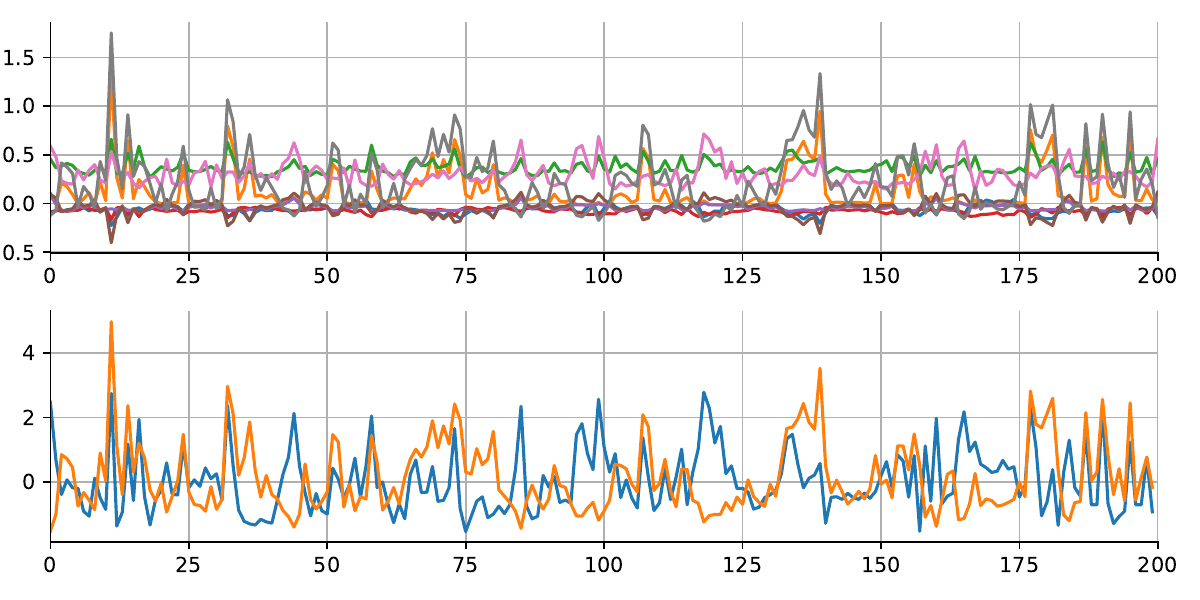}
        \vspace{-0.5cm}
        \caption{Sample of the original observed path (top) $(\mathbf x_t)_t$ \\and the compressed path (bottom) $(\mathbf y_t)_t$.}
        \label{fig:var_compressed_path}
    \end{minipage}
\end{figure}

Utilizing the calibrated autoencoder, the encoder component is applied to obtain the compressed time series. This compressed representation is then used to calibrate the conditional density approximators, enabling the modeling of the conditional density of the original time series based on the compressed data obtained from the autoencoder.

Tables \ref{table:var_compressed_level} - \ref{table:var_rtn} report the performance metrics for the compressed level and return process, and the observed level and return process respectively. Upon examining all the tables, it becomes evident that the performance of sig-spline models consistently improves with higher truncation orders. This holds particularly true for the autocorrelation function, revealing that a truncation order below 3 fails to adequately capture the proper dependence of the $\mathop{VAR}(2)$ process. 

When comparing all the tables, it becomes apparent that the neural spline model holds a slight advantage over the sig-spline model truncated at order 4. However, when considering the performance on the compressed process, the two models demonstrate very comparable performance.

\subsection{Realized volatilities}
\label{sec:vol_dataset}
The subsequent case study explores the effectiveness of the sig-spline model in approximating the dynamics of a real-world multivariate volatilities dataset derived from the MAN AHL Realized Library. To conduct the numerical evaluation, the med-realized volatilities of Standard \& Poor's 500 (SPX), Dow Jones Industrial Average (DJI), Nikkei 225 (N225), Euro STOXX 50 (STOXX50E), and Amsterdam Exchange (AEX) are extracted from the dataframe. These volatilities form a 5-dimensional time series spanning from January 1st, 2005, to December 31st, 2019. \autoref{fig:vols} provides a visual representation of the corresponding historical volatilities.

\sloppy
An observation from \autoref{fig:vols} reveals a strong correlation between both the levels and their returns (refer to \autoref{fig:vol_cc} for the corresponding cross-correlation matrices). Due to these high cross-correlations, an autoencoder is calibrated to compress the $5$-dimensional time series into a $2$-dimensional representation. This $2$-dimensional time series, depicted in \autoref{fig:vols}, serves as the basis for calibrating the neural and sig-spline conditional density estimators.

\begin{figure}[h!]
    \centering
    \includegraphics[width=\linewidth]{./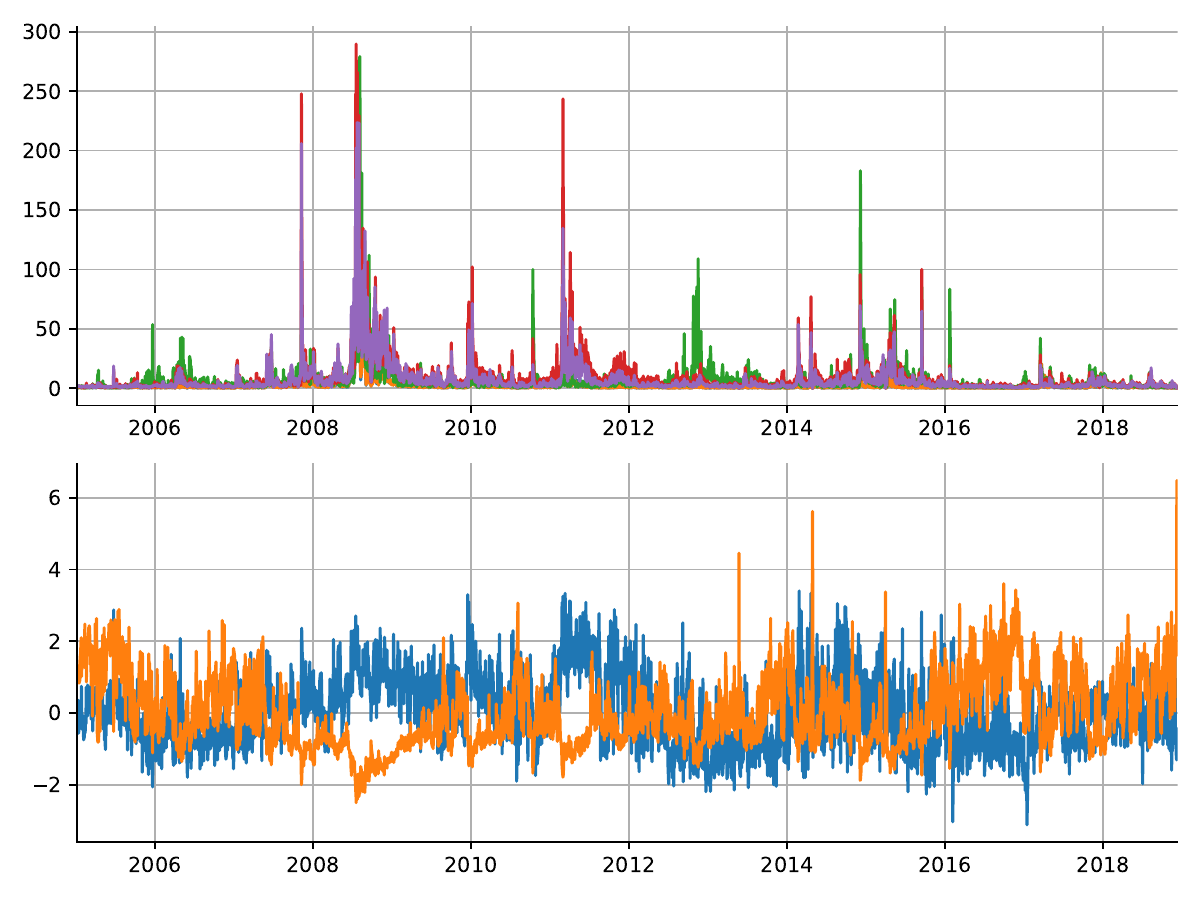}
    \vspace{-0.5cm}
    \caption{Historical realized med-volatilities (top) of the considered stock indices and compressed realized volatilities (bottom).}
    \label{fig:vols}
\end{figure}

\begin{figure}
    \centering
    \includegraphics[width=\linewidth]{./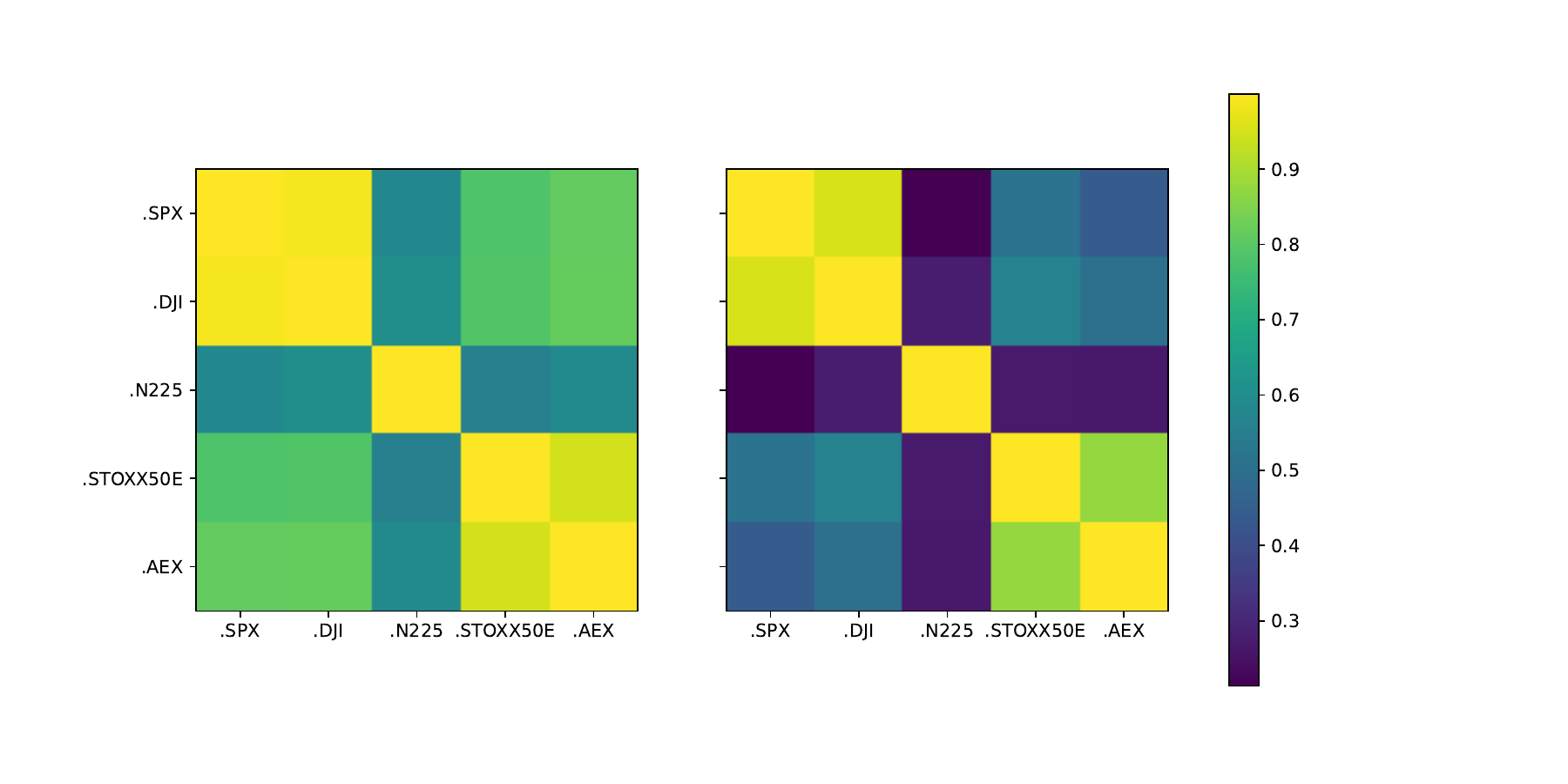}
    \vspace{-0.5cm}
    \caption{Cross-correlation matrices of the volatility levels (left) and returns (right).}
    \label{fig:vol_cc}
\end{figure}

The performance metrics for the compressed 2-dimensional return and level process are presented in \autoref{table:rv_compressed_level} and \autoref{table:rv_compressed_rtn}, while \autoref{table:rv_level} and \ref{table:rv_rtn} display the performance metrics for the original 5-dimensional process. Analysis of \autoref{table:rv_compressed_level} and \ref{table:rv_compressed_rtn} indicates that, in most metrics, the neural spline demonstrates superior performance compared to the sig spline for the compressed process. This suggests that the neural network exhibits better ability to capture complex dependence structures in more intricate real-world datasets. However, it is interesting to note that this performance advantage of the neural spline does not directly translate to an advantage in the observed process, as observed in \autoref{table:rv_level} and \ref{table:rv_rtn}. In fact, the signature spline truncated at order 4 frequently exhibited better performance compared to the neural spline flow.


It is noteworthy that \autoref{table:rv_level} and \autoref{table:rv_rtn} highlight the kurtosis metrics, which stand out prominently. This observation arises from the high kurtosis exhibited by the med-realized volatilities, indicating that both the neural and sig-spline models struggle to capture the heavy-tailed nature of the data.

\subsection{Multi-asset spot returns}
\label{sec:spot_dataset}
The following real-world multi-asset spot return dataset is sourced from the MAN AHL Realized Library. It focuses on the SPX and DJI stock indices, covering the period from January 1st, 2005, to December 31st, 2021 (refer to the top figure of \autoref{fig:multi_asset}).

During the calibration process, it was noted that sig-splines struggled to capture the strong cross-correlations in the returns of SPX and DJI. To address this issue, the spot time series underwent preprocessing by applying Principal Component Analysis (PCA) to the index returns, resulting in whitened returns. The preprocessed time series is depicted in \autoref{fig:multi_asset}. Subsequently, the models were calibrated using the transformed return series.

\autoref{table:multi_compressed_rtn} and \ref{table:multi_rtn} present the performance metrics for both the preprocessed return series and the observed return process. Notably, the cross-correlation metric for all sig-spline models closely compares with the neural spline baseline, thanks to the application of PCA transform during the preprocessing of the spot return series. Both tables demonstrate that sig- and neural splines perform relatively well, with sig-splines showcasing superior performance. Moreover, it is observed that higher orders of truncation lead to improved performance in autocorrelation metrics, which detect serial independence and volatility clustering.

\begin{figure}[h]
    \centering
    \includegraphics[width=\linewidth]{./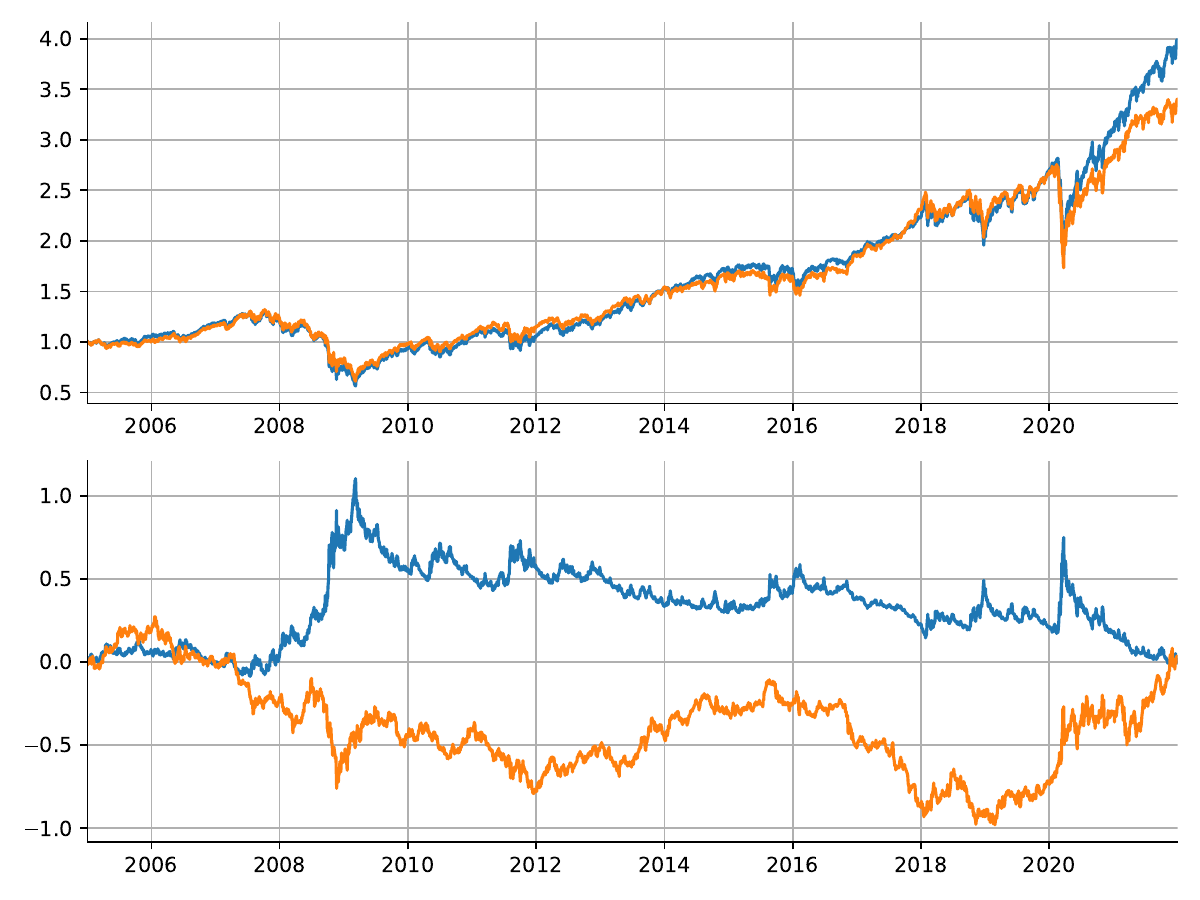}
    \caption{Normalized historical spot path of SPX (blue) and DJI (orange) (top plot) and the preprocessed path (bottom plot).}
    \label{fig:multi_asset}
\end{figure}

\section{Conclusion}
This paper introduced signature spline flows as a generative model for time series data. By employing the signature transform, sig-splines are constructed as an alternative to the neural networks used in neural spline flows. In \autoref{sec:signature_spline_flows}, we formally demonstrate the universal approximation capability of sig-splines, highlighting their ability to approximate any conditional density. Additionally, we establish the convexity of sig-spline calibration with respect to its parameters.

To assess their performance, we compared sig-splines with neural splines in \autoref{sec:numerical} using a simulated benchmark dataset and two real-world financial datasets. Our evaluation, based on standard test metrics, reveals that sig-splines perform comparably to neural spline flows.

The convexity and universality properties of sig-splines pique our interest in future research directions. We believe that pursuing these avenues could yield valuable insights and fruitful outcomes:
\begin{itemize}
    \item Sig-ICA (Signature Independent Component Analysis): Exploring the application of a signature-based ICA introduced in \cite{schell2023nonlinear} could enhance the scalability of the sig-spline generative model. A Sig-ICA preprocessing would allow calibrating a sig-spline model on each coordinate for each coordinate of the process, allowing for less parameters and more interpretablity. 
    \item Regularisation techniques: This paper has not extensively addressed methods for improving the generalization of sig-splines. Investigating regularization techniques specifically tailored to sig-splines could prove beneficial in enhancing their performance and robustness, particularly in scenarios with limited training data or complex dependencies.
\end{itemize}

\clearpage
\bibliography{references}

\clearpage
\appendix
\onecolumn 
\section{Proofs}
\label{appendix:proofs}

\subsection{Proof of Theorem \ref{thm:convex}}
\convexity*

\begin{proof}
    \footnote{The proof follows ideas from \cite*{MLR} paper \cite{MLR}. A short summary of the proof was uploaded by \citeauthor{TrungMLR} \cite{TrungMLR}.} 
    The sum of two convex functions remains convex. We therefore consider the cost function \eqref{eq:sig_spline_loss} without loss of generality for a single sample $\mathbf{x}=(\mathbf{x}_1, \dots, \mathbf{x}_{t+1})$ and only a single conditional density $i \in \lbrace 1, \dots, d \rbrace$; i.e. we consider the cost function of the $i^{th}$ conditional density $J^i: \bigtimes_{k=1}^NT^{(L)}((\R^{1+d})^*) \to \R$ defined as 
    \begin{align*}
        J^i(\mathbf{V}^i)
        &= -\sum_{k=1}^N \Big[{c}_k \Big(\langle \mathbf{v}_{k}, \mathbf{y}\rangle - \ln\big( \sum_{j=1}^N \exp(\langle \mathbf{v}_j, \mathbf{y} \rangle )\big)\Big)\Big]
    \end{align*}
    $\mathbf{y} = \Sig^{(L)}(\eta^i({\mathbf{x}}))$ is the signature of the masked path, and $\mathbf{c} \in \lbrace 0, 1 \rbrace^N$ are the indicator functions
    \begin{equation}
        \mathbf{c} = (\mathbf{1}_{[\alpha^i_{k-1}, \alpha^i_{k}]}(x_{t+1, i}))_{k \in \lbrace 1, \dots, N \rbrace }
    \end{equation}
    where $0=\alpha_0^i < \dots < \alpha_N^i=1$ is the predefined partition of the $x$-axis of the constructed CDF. We obtain the full loss function $J$ as 
    \begin{equation*}
        J(\theta)=J(\mathbf{V}^1, \dots, \mathbf{V}^d) = \sum_{i = 1}^d J^i(\mathbf{V}^i). 
    \end{equation*}
    For legibility we drop the dependence on $i$ and write $H(\mathbf{V})\coloneqq J^i(\mathbf{V}^i)$ instead. To ease the notation we project the linear functionals $ \mathbf{v}_i, i \in \lbrace 1, \dots, N \rbrace $ and the signature $\mathbf{y} \in T^{(L)}(\R^d)$ to the real-valued vector space $\R^K, K={f(L, 1+d)}$, thus the linear functionals can be thought of as \emph{weights} hereafter.

    To show that the cost function $H: \bigtimes_{i=1}^N \R^K \to \R$ is convex we derive the first- and second-order gradients with respect to the model parameters $\mathbf{V}=(\mathbf{v}_1, \dots, \mathbf{v}_N)$. The first-order derivative is defined for a single set of weights $\mathbf{v}_i, i \in \lbrace 1, \dots, N\rbrace$ is 
    \begin{equation*}
        \dfrac{\partial H(\mathbf{V})}{\partial \mathbf{v}_i} 
        = 
        \dfrac{\exp(\langle \mathbf{v}_i, \mathbf{y}\rangle )}{\sum_{j=1}^N\exp(\langle \mathbf{v}_j, \mathbf{y}\rangle)} \mathbf{y} -{c}_i \mathbf{y}
        = 
        {p}_i \mathbf{y} - {c}_i \mathbf{y}
    \end{equation*}
    where ${p}_i = \Phi(\Sig^{(L)} )$
    and can be expressed in matrix-vector notation as
    \begin{equation*}
        \nabla_{\mathbf{V}} H(\mathbf{V}) = (\mathbf{p} - \mathbf{c})\otimes \mathbf{y}
    \end{equation*}
    where $\otimes$ denotes the Kronecker product. 
    Furthermore, the second-order gradient is given for any $i, j \in \lbrace 1, \dots, N \rbrace $ as 
    \begin{equation*}
        \dfrac{\partial H(\mathbf{V})}{\partial \mathbf{v}_j \partial \mathbf{v}_i^T} 
        = 
        \dfrac{\partial}{\partial \mathbf{v}_j}\left({p}_i \mathbf{y} - {c}_i \mathbf{y}\right)^T 
        = 
        \dfrac{\partial}{\partial \mathbf{v}_j} p_i \mathbf{y}^T 
        = 
        (\delta_{ij} {p}_i - {p}_i {p}_j) \mathbf{y} \mathbf{y}^T
    \end{equation*}
    which in matrix-vector notation can be expressed as
    \begin{equation}
        \label{eq:hessian}
        \nabla^2_{\mathbf{V}} H(\mathbf{V}) = (\mathbf{D}(\mathbf{p}) - \mathbf{p}\mathbf{p}^T)\otimes \mathbf{y} \mathbf{y}^T 
    \end{equation}
    where $\mathbb{D}(\mathbb{p})$ denotes the weight matrix with diagonal entries 
    
    Finally, we need to show that the Hessian of the cost function $H$ with respect to $\mathbf{V}$ is positive semidefinite. To demonstrate this, first recall that if the Eigenvalues of the square matrices $ \mathbf{A} \in \R^{n \times n}, \mathbf{B} \in \R^{m\times m}$ are $\alpha_i, i \in \lbrace 1, \dots, n \rbrace$ and $\beta_j, j\in \lbrace 1, \dots, m \rbrace$ respectively, then the Eigenvalues of $\mathbf{A} \otimes \mathbf{B}$ are $\alpha_i\beta_j, i \in \lbrace 1, \dots, n \rbrace, j \in \lbrace 1, \dots, m \rbrace$. Thus, it suffices to show that the matrices $(\mathbf{D}(\mathbf{p}) - \mathbf{p}\mathbf{p}^T) \in \R^{N \times N}$ and $\mathbf{y} \mathbf{y}^T \in \R^{K \times K} $ are positive semidefinite. For the latter matrix this is straightforward since for any $\mathbf{a} \in \R^K$ we have $ \mathbf{a}^T\mathbf{y}\mathbf{y}^T\mathbf{a} = (\mathbf{a}^T\mathbf{y})^2 \geq 0 $. For the former we can show that it is diagonally-dominant and conclude that it is positive semidefinite \textbf{CITE}.
    Thus, 
    \begin{equation}
        \lambda_{\min} (\nabla^2_{\mathbf{V}} H(\mathbf{V})) = \lambda_{\min}(\mathbf{D}(\mathbf{p}) - \mathbf{p}\mathbf{p}^T)\cdot \lambda_{\min}(\mathbf{y} \mathbf{y}^T) \geq 0 
    \end{equation}
    which concludes the proof. 
\end{proof}

\begin{remark}
    Note that the convexity of the cost function $J: \Theta \to \R $ does not hold if one makes the $\alpha$-partition. 
\end{remark}

\subsection{Proof of Theorem \ref{thm:universal_sig_splines}}

\sigsplines*
    
\begin{proof}
    We first note that by factoring the true and model densities into the product of the conditionals $p({x}_i | \mathbf{x}_{:i-1}, \F_t)$, we can write (dropping the explicit reference to the filtration to ease notation) 

    \begin{equation*}
        \left| p( \mathbf{x}_{t+1} | \F_t) -  p_\mathbf{W}( \mathbf{x}_{t+1} | \F_t) \right| = \left| \prod_{i=1}^d p({x}_i | \mathbf{x}_{:i-1}) - \prod_{i=1}^d p_\mathbf{W}({x}_i | \mathbf{x}_{:i-1}) \right|
    \end{equation*}

    To use this factorisation, we make use of the following lemma.

    \begin{lemma}
        Let $a_1, \ldots, a_d$ and $b_1, \ldots, b_d$ be two bounded sequences of real numbers. Then there exists $M \in \R$ such that 

        \begin{equation*}
            \left| \prod_{i=1}^d a_i - \prod_{i=1}^d b_i \right| \leq M \sum_{i=1}^d \left| a_i - b_i \right|
        \end{equation*}
    \end{lemma}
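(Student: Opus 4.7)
The plan is to use a standard telescoping identity that expresses the difference of two products as a sum of $d$ terms, each containing exactly one factor of the form $(a_k - b_k)$, with the remaining factors drawn from either the $a_j$'s or the $b_j$'s. Concretely, I would first establish the identity
$$\prod_{i=1}^d a_i - \prod_{i=1}^d b_i = \sum_{k=1}^d \left(\prod_{i<k} b_i\right)(a_k - b_k)\left(\prod_{i>k} a_i\right),$$
either directly by induction on $d$ or by recognizing the right-hand side as a telescoping sum whose intermediate hybrid products cancel in consecutive pairs. This is a classical identity and its verification is a routine rearrangement.

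Next, using the boundedness assumption, I would choose a single constant $C \geq 1$ such that $|a_i| \leq C$ and $|b_i| \leq C$ for every $i \in \lbrace 1, \ldots, d \rbrace$. Taking absolute values of both sides of the telescoping identity and applying the triangle inequality, each of the $d$ summands is bounded by $C^{d-1}\,|a_k - b_k|$, yielding
$$\left| \prod_{i=1}^d a_i - \prod_{i=1}^d b_i \right| \leq C^{d-1} \sum_{k=1}^d |a_k - b_k|,$$
so the constant $M := C^{d-1}$ satisfies the claim.

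There is essentially no deep obstacle here; the only thing to be careful about is writing the telescoping identity correctly and ensuring that the bound $C$ dominates both sequences simultaneously. An alternative route which avoids even writing the identity in closed form is a direct induction on $d$: the base case $d=1$ is trivial with $M=1$, and the inductive step follows from the elementary manipulation
$$\prod_{i=1}^d a_i - \prod_{i=1}^d b_i = a_d \left(\prod_{i=1}^{d-1} a_i - \prod_{i=1}^{d-1} b_i\right) + (a_d - b_d)\prod_{i=1}^{d-1} b_i,$$
to which the inductive hypothesis and boundedness are applied term by term. Either route yields the same explicit constant $M = C^{d-1}$, which is the form that will be plugged back into the main proof of Theorem \ref{thm:universal_sig_splines} to reduce approximation of the joint conditional density to simultaneous approximation of each one-dimensional conditional.
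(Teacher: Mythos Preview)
Your proof is correct and follows essentially the same route as the paper: the paper writes the telescoping identity $\prod_{i=1}^d a_i - \prod_{i=1}^d b_i = \sum_{i=1}^d a_1 \cdots a_{i-1}(a_i - b_i)b_{i+1} \cdots b_d$ (your version simply swaps the roles of the $a$'s and $b$'s) and then applies the triangle inequality with $M = \max_i |a_1 \cdots a_{i-1} b_{i+1} \cdots b_d|$, which your uniform bound $C^{d-1}$ dominates.
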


    \begin{proof}
        Write $\prod_{i=1}^d a_i - \prod_{i=1}^d b_i = \sum_{i=1}^d a_1 \cdots a_{i-1}(a_i - b_i)b_{i+1} \cdots b_d$. Then taking absolute values and applying the triangle inequality, taking $M = \max_{i=1,\ldots, d} |a_1 \cdots a_{i-1}b_{i+1} \cdots b_d|$ we obtain the result.
    \end{proof}
    
    Hence, there exists $M \in \R$ such that we can write 

    \begin{equation*}
        \left| \prod_{i=1}^d p({x}_i | \mathbf{x}_{:i-1}) - \prod_{i=1}^d p_\mathbf{W}({x}_i | \mathbf{x}_{:i-1}) \right| \leq M \sum_{i=1}^d \left| p({x}_i | \mathbf{x}_{:i-1}) - p_\mathbf{W}({x}_i | \mathbf{x}_{:i-1}) \right| 
    \end{equation*}

    Thus, since to prove the claim, it would be sufficient to establish that we can approximate each conditional arbitrarily well, without loss of generality we can focus on the univariate case $d=1$. For any $N \in \N$, define the piecewise constant approximation of the true conditional density as 
    \begin{equation*}
        \hat{p}(x_{t+1}| \F_t) \coloneqq \prod_{j=1}^N \hat{p}_j^{c_j} 
    \end{equation*}
    with 
    \begin{equation*}
        \hat{p}_j \coloneqq N \int_{\frac{j-1}{N}}^{\frac{j}{N}} p(x_{t+1}| \F_t) \ \mathrm{d} x_{t+1}  
    \end{equation*}
    and $c_j$ defined as above. Such piecewise constant functions are known to be dense in $L^2[0, 1]$, hence given any $\varepsilon > 0$ we can find $N$ such that 

    \begin{equation*}
        | p(x_{t+1} | \F_t) -  \hat{p}(x_{t+1} | \F_t) | < \varepsilon / 2
    \end{equation*}
    according to the $L^2$ norm. For such $N$, consider the log probabilities $q_j = \ln \hat{p}_j$. Due to the universality of signatures, for each $j=1, \ldots, N$ we can find a trunctation order $L_j \coloneqq L_j(N)$ and a weight vector $\mathbf{w}_j \in \R^{f(d, L_j)}$ such that $| q_j - \mathbf{w}_j^T \mathbf{y}| < \varepsilon / 2N$. Define $L = \max \{ L_j : j = 1, \ldots, N \}$ and $K=f(d,L)$, then take the weight matrix $\mathbf{W} \in \bigtimes_{i=1}^N T^{(L)}(\R^{1+d})$ to have rows $\mathbf{w}_j$ concatenated with zeros whenever $L_j < L$. Thus, writing $\mathbf{q} = (q_1, \ldots, q_N)$ we have 
    
    \begin{equation*}
        | \mathbf{q} - \mathbf{W} \mathbf{y} |  \leq \sum_{j=1}^N | q_j - \mathbf{w}_j^T \mathbf{y} | \leq \varepsilon / 2
    \end{equation*}

    Defining the vectors $\hat{\mathbf{p}} = \softmax(\mathbf{q})$ and $\mathbf{p}_\mathbf{W} =\softmax(\mathbf{Wy})$ similarly, by the Lipschitz property of the softmax function, with constant less than $1$ for $N > 2$, we have

    \begin{equation*}
        | \hat{\mathbf{p}} - \mathbf{p}_\mathbf{W} | \leq | \mathbf{q} - \mathbf{W} \mathbf{y} | \leq \varepsilon / 2
    \end{equation*}

    Hence, with an application of the triangle inequality we obtain our result
    \begin{equation*}
        | p(x_{t+1}| \F_t) - p_\mathbf{W}(x_{t+1}| \F_t) | \leq \varepsilon \ .
    \end{equation*}

\end{proof}

\clearpage 
\counterwithin{table}{subsection}
\renewcommand\thetable{\thesubsection.\arabic{table}}
\clearpage
\onecolumn
\section{Numerical results}
\label{appendix:numerical}
\subsection{VAR model}
\begin{table}[h!]\centering\begin{tabular}{lllll}\toprule Model / Test metrics & $|\rho^x_h - \rho^x_g|_1$ & $|\kappa^x_h - \kappa^x_g|_1$ & $|s_h^x - s_g^x|_1$ & $|\Sigma_h^x - \Sigma_g^x|_1$ \\ \midrule Sig-Spline (Order 1)& $ 0.0001 \pm 0.0001$ & $ 0.0167 \pm 0.0053$ & $ 0.0031 \pm 0.0015$ & $ 0.0019 \pm 0.0003$ \\ \midrule Sig-Spline (Order 2)& $ 0.0001 \pm 0.0000$ & $ 0.0179 \pm 0.0071$ & $ 0.0039 \pm 0.0015$ & $ 0.0019 \pm 0.0002$ \\ \midrule Sig-Spline (Order 3)& $ 0.0001 \pm 0.0001$ & $ 0.0159 \pm 0.0046$ & $ 0.0028 \pm 0.0012$ & $ 0.0017 \pm 0.0003$ \\ \midrule Sig-Spline (Order 4)& $ 0.0001 \pm 0.0000$ & $ 0.0138 \pm 0.0052$ & $ 0.0028 \pm 0.0015$ & $ 0.0018 \pm 0.0004$ \\ \midrule Neural spline flow& $0.0001 \pm 0.0001$ & $0.0096 \pm 0.0060$ & $0.0046 \pm 0.0016$ & $0.0011 \pm 0.0008$ \\ \bottomrule \end{tabular}\caption{multi asset spot level}\end{table}
\begin{table}[h!]\centering\begin{tabular}{llllll}\toprule Model / Test metrics & $|\rho^r_h - \rho^r_g|_1$ & $|\kappa^r_h - \kappa^r_g|_1$ & $|s_h^r - s_g^r|_1$ & $|\Sigma_h^r - \Sigma_g^r|_1$ & $|\rho^{|r|}_h - \rho^{|r|}_g|_1$ \\ \midrule Sig-Spline (Order 1)& $ 0.0140 \pm 0.0026$ & $ 0.7377 \pm 0.4659$ & $ 0.1238 \pm 0.0557$ & $ 0.0043 \pm 0.0029$ & $ 0.0140 \pm 0.0026$ \\ \midrule Sig-Spline (Order 2)& $ 0.0115 \pm 0.0039$ & $ 0.7787 \pm 0.5196$ & $ 0.1056 \pm 0.0391$ & $ 0.0036 \pm 0.0023$ & $ 0.0115 \pm 0.0039$ \\ \midrule Sig-Spline (Order 3)& $ 0.0103 \pm 0.0064$ & $ 0.9353 \pm 0.4326$ & $ \textbf{0.0995} \pm 0.0457$ & $ 0.0050 \pm 0.0039$ & $ 0.0103 \pm 0.0064$ \\ \midrule Sig-Spline (Order 4)& $ \textbf{0.0085} \pm 0.0058$ & $ 0.8250 \pm 0.3648$ & $ 0.0998 \pm 0.0577$ & $ 0.0041 \pm 0.0035$ & $ \textbf{0.0085} \pm 0.0058$ \\ \midrule Neural spline flow& $0.0145 \pm 0.0017$ & $\textbf{0.7160} \pm 0.3343$ & $0.1176 \pm 0.0621$ & $\textbf{0.0017} \pm 0.0010$ & $0.0145 \pm 0.0017$ \\ \bottomrule \end{tabular}\caption{Performance metrics of the preprocessed return process.}\label{table:multi_compressed_rtn}\end{table}
\begin{table}[h!]\centering\begin{tabular}{lllll}\toprule Model / Test metrics & $|\rho^x_h - \rho^x_g|_1$ & $|\kappa^x_h - \kappa^x_g|_1$ & $|s_h^x - s_g^x|_1$ & $|\Sigma_h^x - \Sigma_g^x|_1$ \\ \midrule Sig-Spline (Order 1)& $ 0.0514 \pm 0.0024$ & $ 12.3531 \pm 3.8606$ & $ 2.1035 \pm 0.2051$ & $ 0.0114 \pm 0.0005$ \\ \midrule Sig-Spline (Order 2)& $ 0.0512 \pm 0.0017$ & $ 12.1088 \pm 1.5706$ & $ 2.1740 \pm 0.2255$ & $ 0.0113 \pm 0.0005$ \\ \midrule Sig-Spline (Order 3)& $ 0.0512 \pm 0.0019$ & $ 12.1351 \pm 2.1431$ & $ 2.2561 \pm 0.2170$ & $ 0.0114 \pm 0.0006$ \\ \midrule Sig-Spline (Order 4)& $ 0.0509 \pm 0.0012$ & $ 11.6385 \pm 1.6616$ & $ 2.2019 \pm 0.2160$ & $ 0.0114 \pm 0.0005$ \\ \midrule Neural spline flow& $0.0526 \pm 0.0016$ & $8.9922 \pm 0.6928$ & $1.9180 \pm 0.0765$ & $0.0112 \pm 0.0003$ \\ \bottomrule \end{tabular}\caption{multi asset spot level}\end{table}
\begin{table}[h!]\centering\begin{tabular}{llllll}\toprule Model / Test metrics & $|\rho^r_h - \rho^r_g|_1$ & $|\kappa^r_h - \kappa^r_g|_1$ & $|s_h^r - s_g^r|_1$ & $|\Sigma_h^r - \Sigma_g^r|_1$ & $|\rho^{|r|}_h - \rho^{|r|}_g|_1$ \\ \midrule Sig-Spline (Order 1)& $ 0.0307 \pm 0.0037$ & $ 14.8671 \pm 0.9075$ & $ 0.8110 \pm 0.1067$ & $ 0.0057 \pm 0.0004$ & $ 0.0307 \pm 0.0037$ \\ \midrule Sig-Spline (Order 2)& $ 0.0259 \pm 0.0078$ & $ 15.3513 \pm 0.7230$ & $ 0.8226 \pm 0.1230$ & $ 0.0056 \pm 0.0004$ & $ 0.0259 \pm 0.0078$ \\ \midrule Sig-Spline (Order 3)& $ 0.0234 \pm 0.0074$ & $ 15.2616 \pm 0.8126$ & $ 0.7708 \pm 0.1158$ & $ \textbf{0.0055} \pm 0.0004$ & $ 0.0234 \pm 0.0074$ \\ \midrule Sig-Spline (Order 4)& $ \textbf{0.0211} \pm 0.0087$ & $ 15.4545 \pm 0.6326$ & $ 0.7699 \pm 0.0941$ & $ 0.0056 \pm 0.0003$ & $ \textbf{0.0211} \pm 0.0087$ \\ \midrule Neural spline flow& $0.0345 \pm 0.0021$ & $\textbf{13.6939} \pm 1.3599$ & $\textbf{0.6839} \pm 0.1220$ & $0.0057 \pm 0.0004$ & $0.0345 \pm 0.0021$ \\ \bottomrule \end{tabular}\caption{Performance metrics of the observed return process.}\label{table:multi_rtn}\end{table}

\clearpage

\subsection{Realized volatilities}

\clearpage
\subsection{Multi-asset spot returns}


\end{document}